\pgfplotsset{compat=1.17}
\newtheorem{theorem}{Theorem}
\newtheorem{proposition}[theorem]{Proposition}
\newtheorem{assumption}{Assumption}
\newtheorem{requirement}{Requirement}
\newtheorem*{result*}{Result}
\newcommand{\E}{\mathrm{E}}
\newcommand{\PR}{\mathsf{P}}
\newcommand{\Z}{\mathbb{Z}}
\newcommand{\R}{\mathbb{R}}
\newcommand{\1}{\mathbbm{1}}
\DeclareMathOperator*{\argmax}{arg\,max}
\DeclarePairedDelimiterX{\infdivx}[2]{(}{)}{%
  #1\;\delimsize\|\;#2%
}
\newcommand{\infdiv}{D\infdivx}
\title{Multi-armed Bandit Requiring Monotone Arm Sequences} 
\author{%
  Ningyuan Chen\\
  Rotman School of Management, University of Toronto\\
  105 St George St, Toronto, ON, Canada\\
  \texttt{ningyuan.chen@utoronto.ca}
}
\begin{document}

\maketitle

\begin{abstract}
In many online learning or multi-armed bandit problems, the taken actions or pulled arms are ordinal and required to be monotone over time. Examples include dynamic pricing, in which the firms use markup pricing policies to please early adopters and deter strategic waiting, and clinical trials, in which the dose allocation usually follows the dose escalation principle to prevent dose limiting toxicities. We consider the continuum-armed bandit problem when the arm sequence is required to be monotone. We show that when the unknown objective function is Lipschitz continuous, the regret is $O(T)$. When in addition the objective function is unimodal or quasiconcave, the regret is $\tilde O(T^{3/4})$ under the proposed algorithm, which is also shown to be the optimal rate. This deviates from the optimal rate $\tilde O(T^{2/3})$ in the continuous-armed bandit literature and demonstrates the cost to the learning efficiency brought by the monotonicity requirement.
\end{abstract}

\section{Introduction}\label{sec:intro}
In online learning problems such as the multi-armed bandits (MAB), the decision maker chooses from a set of actions/arms with unknown reward.
The goal is to learn the expected rewards of the arms and choose the optimal one as much as possible within a finite horizon.
The framework and its many variants have been successfully applied to a wide range of practical problems, including news recommendation \cite{li2010contextual}, dynamic pricing \cite{besbes2009dynamic} and medical decision-making \cite{bastani2020online}.

Popular algorithms such as UCB \cite{auer2002using,auer2002finite} and Thompson sampling \cite{agrawal2013further,russo2018thompson}
typically explore the arms sufficiently and as more evidence is gathered, converge to the optimal arm.
These algorithms are designed to handle the classic MAB problems, in which the arms are discrete and do not have any order.
However, in some practical problems, the arms are ordinal and even continuous.
This is the focus of the study on the so-called continuum-armed bandit \cite{agrawal1995continuum,kleinberg2004nearly,auer2007improved,bubeck2011x,slivkins2011contextual}.
By discretizing the arm space properly, algorithms such as UCB and Thompson sampling can still be applied.
Therefore, the continuous decision space has readily been incorporated into the framework.

Another practical issue, which is not studied thoroughly but emerges naturally from some applications, is the requirement that the action/arm sequence is has to be monotone.
Such a requirement is usually imposed due to external reasons that cannot be internalized to be part of the regret.
We give two concrete examples below.

\textbf{Dynamic pricing}. When launching a new product, how does the market demand reacts to the price is typically unknown.
Online learning is a popular tool for the firm to learn the demand and make profits simultaneously.
See \cite{den2015dynamic} for a comprehensive review.
In this case, the arms are the prices charged by the firm.
However, changing prices arbitrarily over time may have severe adverse effects.
First, early adopters of the product are usually loyal customers.
Price drops at a later stage may cause a backlash in this group of customers, whom the firm is the least willing to offend.
Second, customers are usually strategic, as studied extensively in the literature \cite{su2007intertemporal,liu2008strategic,yin2009optimal,li2010contextual,chen2018robust}.
The possibility of a future discount may make consumers intentionally wait and hurt the revenue.
A remedy for both issues is a \emph{markup} pricing policy, in which the firm increases the price over time.
This is particularly common for digital services growing rapidly: both Dropbox and Google Drive went through the phases from free, to low fees, to high fees.
In a slightly different context, markup pricing is often used in successful crowdfunding campaigns: to reward early backers, the initial products may be given for free; later after some goals are achieved, backers may get the product at a discount; after the release, the product is sold at the full price. It is not a good idea to antagonize early backers by lowering the price later on.
In the language of multi-armed bandit, it is translated to the requirement that the pulled arms in each period are increasing over time.

\textbf{Clinical trials}. Early stage clinical trials for the optimal dose are usually conducted in Phase I/II trials sequentially.
For Phase I, the toxicity of the doses is examined to determine a safe dose range. In this phase,
the dose allocation commonly follows the ``dose escalation'' principle \cite{le2009dose}
and the ``3+3 design'' is the most common dose allocation approach.
For Phase II, the efficacy of the doses are tested within the safe dose range (by random assignment, for example).
Recently, especially in oncology, there are proposals to integrate both phases \cite{wages2015seamless} which is called ``seamless Phase I/II trials.''
In this case, dose escalation has to be observed when looking for the most efficacious dose. Our study is thus motivated by seamless trials.
In the online learning framework, the dosage is treated as arms and the efficacy is the unknown reward.
Multi-armed bandit has been applied to the dose allocation to identify the optimal dose \cite{villar2015multi,aziz2021multi}.
In particular, after a dose has been assigned to a cohort of patients, their responses are recorded before the next cohort is assigned a higher dose.
If any of the patients experiences dose limiting toxicities (DLTs), then the trial stops immediately.
The dose escalation principle requires the allocated doses to be increasing over time.
A recent paper incorporates the principle in the design of adaptive trials \cite{chen2021adaptive}.

To accommodate the above applications, in this paper, we incorporate the requirement of monotone arm sequences in online learning, or more specifically, the continuum-armed bandit framework.
We consider an unknown continuous function $f(x)$ and the decision maker may choose an action/arm $X_t$ in period $t$.
The observed reward $Z_t$ is a random variable with mean $f(X_t)$.
The requirement of monotone arm sequences is translated to $X_1\le X_2\le \dots\le X_T$, where $T$ is the length of the horizon.
The goal of the decision maker is to minimize the regret $T\max f(x)- \sum_{t=1}^T \E[f(X_t)]$.

\textbf{Main results}. Next we summarize the main results of the paper informally.
We first show that for a general continuous function $f(x)$, the extra requirement makes learning impossible.
\begin{result*}
For the class of continuous functions $f(x)$, the regret is at least $T/4$.
\end{result*}
This diverges from the results in the continuum-armed bandit literature.
In particular, \cite{kleinberg2004nearly} shows that the optimal regret for continuous functions scales with $T^{2/3}$ without the monotonicity requirement.
Therefore, to prove meaningful results, we need to restrict the class of functions $f(x)$.
From both the practical and theoretical points of view, we find that assuming $f(x)$ to be unimodal or quasiconcave is reasonable.
In practice, for the two applications mentioned above, the revenue function in dynamic pricing is typically assumed to be quasiconcave in the price \cite{ziya2004relationships};
for clinical trials, the majority of the literature assumes the dose-response curve to be increasing \cite{braun2002bivariate},
increasing with a plateau effect \cite{riviere2018phase}, or unimodal \cite{zhang2006adaptive}, all of which satisfy the assumption.
In theory, with the additional assumption, we can show that
\begin{result*}
    There is an algorithm that has monotone arm sequences and achieves regret $O(\log (T)T^{3/4})$ for continuous and quasiconcave functions $f(x)$.
\end{result*}
Note that the rate $T^{3/4}$ is still different from that in the continuum-armed bandit literature.
We then proceed to close the case by arguing that this is the fundamental limit of learning, not due to the design of the algorithm.
\begin{result*}
    No algorithm can achieve less regret than $T^{3/4}/32$ for the instances we construct.
\end{result*}
The constructed instances that are hard to learn (see Figure~\ref{fig:lb}) are significantly different from the literature.
In fact, the classic construction of ``needle in a haystack'' \cite{kleinberg2008multi} does not leverage the requirement of monotone arm sequences and does not give the optimal lower bound.
We prove the lower bound by using a novel random variable that is only a stopping time under monotone arm sequences.
By studying the Kullback-Leibler divergence of the probabilities measures involving the stopping time, we are able to obtain the lower bound.
We also show numerical studies that complement the theoretical findings.

\textbf{Related literature}.
The framework of this paper is similar to the continuum-armed bandit problems, which are widely studied in the literature.
The earlier studies focus on univariate Lipschitz continuous functions \cite{agrawal1995continuum,kleinberg2004nearly} and identify the optimal rate of regret $O(T^{2/3})$.
Later papers study various implications including smoothness \cite{auer2007improved,chen2019nonparametric,hu2020smooth}, the presence of constraints and convexity \cite{besbes2009dynamic,wang2014close}, contextual information and high dimensions \cite{slivkins2011contextual,chen2021nonparametric}, and so on.
The typical reduction used in this stream of literature is to adopt a proper discretization scheme and translate the problem into discrete arms.
In the regret analysis, the Lipschitz continuity or smoothness can be used to control the discretization error.
Our setting is similar to the study in \cite{kleinberg2004nearly} with quasiconcave objective functions and the additional requirement of monotone arm sequences.
The optimal rate of regret, as a result of the difference, deteriorates from $T^{2/3}$ to $T^{3/4}$.
This deterioration can be viewed as the cost of the monotonicity requirement.

In a recent paper, \cite{salem2021taming} study a similar problem while assuming the function $f(x)$ is strictly concave.
The motivation is from the notion of ``fairness'' in sequential decision making \cite{zhang2020fairness}.
The concavity allows them to develop an algorithm based on the gradient information.
The resulting regret is $O(T^{1/2})$, the same as continuum-armed bandit with convexity/concavity \cite{cope2009regret,agarwal2011stochastic}.
Comparing to this set of results, the strong concavity prevents the deterioration of regret rates under the monotonicity requirement.
In contrast, \cite{combes2014unimodal,combes2020unimodal} show that the same rate of regret $O(T^{1/2})$ can be achieved if the function is not strictly concave, but just unimodal or quasiconcave, under additional technical assumptions.
It is worth noting that many results in this paper are discovered independently in a recent paper \cite{jia2021markdown}.
The comparison of the regret rates are shown in Table~\ref{tab:literature}.
\begin{table}[]
    \centering
\caption{The \emph{optimal} regret under different settings in the continuum-armed bandit. The regret in \cite{combes2014unimodal} is achieved under additional assumptions.}
    \label{tab:literature}
    \begin{tabular}{ccccc}
        \toprule
        Paper & Concavity & Quasiconcavity & Monotone Arm Sequences & Regret \\
        \midrule
        \cite{kleinberg2004nearly} & N & N & N & $\tilde O(T^{2/3})$\\
        This paper & N & N & Y & $O(T)$\\
        \cite{cope2009regret} & Y & Y & N & $\tilde O(T^{1/2})$\\
        \cite{salem2021taming} & Y & Y & Y & $\tilde O(T^{1/2})$\\
        \cite{combes2014unimodal} & N & Y & N & $\tilde O(T^{1/2})$\\
        This paper & N & Y & Y & $\tilde O(T^{3/4})$\\
        \bottomrule
    \end{tabular}
\end{table}

Other requirements on the arm sequence motivated by practical problems are also considered in the literature.
\cite{cheung2017dynamic,simchi2019phase,simchi2019blind} are motivated by a similar concern in dynamic pricing: customers are hostile to frequent price changes.
They propose algorithms that have a limited number of switches and study the impact on the regret.
Motivated by Phase I clinical trials, \cite{garivier2017thresholding} study the best arm identification problem when the reward of the arms are monotone
and the goal is to identify the arm closest to a threshold.
\cite{chen2020learning} consider the setting when the reward is nonstationary and periodic, which is typical in the retail industry as demand exhibits seasonality.

\textbf{Notations.} We use $\Z_+$ and $\R$ to denote all positive integers and real numbers, respectively.
Let $T\in \mathbb Z_+$ be the length of the horizon.
Let $[m]\coloneqq \left\{1,\dots,m\right\}$ for any $m\in \Z_+$.
Let $x^*=\argmax_{x\in [0,1]} f(x)$ be the set of maximizers of a function $f(x)$ in $[0,1]$.
The indicator function is represented by $\1$.

\section{Problem Setup}\label{sec:problem}
The objective function $f(x): [0,1]\to [0,1]$ is unknown to the decision maker.
In each period $t\in [T]$, the decision maker chooses $X_t\in [0,1]$ and collects a random reward $Z_t$ which is independent of everything else conditional on $X_t$ and satisfies $\E[Z_t|X_t]=f(X_t)$.
After $T$ periods, the total (random) reward collected by the decision maker is thus $\sum_{t=1}^T Z_t$.
Since the decision maker does not know $f(x)$ initially, the
arm chosen in period $t$ only depends on the previously chosen arms and observed rewards.
That is, for a policy $\pi$ of the decision maker, we have
$X_t=\pi_t(X_1, Z_1,\dots, X_{t-1}, Z_{t-1}, U_t)$, where $U_t$ is an independent random variable associated with the internal randomizer of $\pi_t$ (e.g., Thompson sampling).

Before proceeding, we first introduce a few standard assumptions.
\begin{assumption}\label{asp:continuity}
    The function $f(x)$ is $c$-Lipschitz continuous, i.e.,
    \begin{equation*}
        |f(x_1)-f(x_2)|\le c|x_1-x_2|,
    \end{equation*}
    for $x_1,x_2\in [0,1]$.
\end{assumption}
This is a standard assumption in the continuum-armed bandit literature \cite{kleinberg2004nearly,auer2007improved,bubeck2011x,slivkins2011contextual}.
Indeed, without continuity, the decision maker has no hope to learn the objective function well.
In this study, we do not consider other smoothness conditions.

\begin{assumption}\label{asp:subgaussian}
    The reward $Z_t$ is $\sigma$-subgaussian for all $t\in [T]$. That is, conditional on $X_t$,
    \begin{equation*}
        \E[\exp(\lambda (Z_t-f(X_t)))|X_t] \le \exp(\lambda^2\sigma^2/2)
    \end{equation*}
    for all $\lambda\in \R$.
\end{assumption}
Subgaussian random variables are the focus of most papers in the literature, due to their benign theoretical properties.
See \cite{lattimore2020bandit} for more details.

We can now define an oracle that has the knowledge of $f(x)$.
The oracle would choose arm $x\in \argmax_{x\in[0,1]} f(x)$ and collect the total reward with mean $T \max_{x\in [0,1]} f(x)$.
Because of Assumption~\ref{asp:continuity}, the maximum is well defined.
We define the regret of a policy $\pi$ as
\begin{equation*}
    R_{f,\pi}(T) = T\max_{x\in[0,1]} f(x)-\sum_{t=1}^T\E[f(X_t)].
\end{equation*}
Note that the regret depends on the unknown function $f(x)$, which is not very meaningful.
As a standard setup in the continuum-armed bandit problem, we consider the worst-case regret for all $f$ satisfying Assumption~\ref{asp:subgaussian}.
\begin{equation*}
    R_{\pi}(T)\coloneqq \max_{f} R_{f,\pi}(T).
\end{equation*}

Next we present the key requirement in this study: the sequence of arms must be \emph{increasing}.
\begin{requirement}[Increasing Arm Sequence]\label{rqm:increasing}
    The sequence $\left\{X_t\right\}_{t=1}^T$ must satisfy
    \begin{equation*}
        X_1\le X_2\le \dots\le X_T
    \end{equation*}
    almost surely under $\pi$.
\end{requirement}
Note that ``increasing'' can be relaxed to ``monotone'' without much effort.
For the simplicity of exposition, we only consider the increasing arm sequence.

As explained in the introduction, this is a practical requirement in several applications.
In phase I/II clinical trials, the quantity $X_t$ is the dose applied to the $t$-th patient and $f(x)$ is the average efficacy of dose $x$.
The dose escalation principle protects the patients from DLTs.
More precisely, the doses applied to the patients must increase gradually over time such that the trial can be stopped before serious side effects occur to a batch of patients under high doses.

In dynamic pricing, $X_t$ is the price charged for the $t$-th customer and $f(x)$ is the expected revenue under price $x$.
The increasing arm sequence corresponds to a \emph{markup} pricing strategy that is typical for new products.
Because of the increasing price, early adopters, who are usually loyal to the product, wouldn't feel betrayed by the firm.
Moreover, it deters strategic consumer behavior that may cause a large number of customers to wait for promotions, hurting the firm's profits.

We point out that under Assumption~\ref{asp:continuity} and~\ref{asp:subgaussian}, which typically suffice to guarantee sublinear regret,
the regret $R_\pi(T)$ is at least $T/4$ under Requirement~\ref{rqm:increasing} for all policies $\pi$.
\begin{proposition}\label{prop:linear-regret}
For all $\pi$ satisfying Requirement~\ref{rqm:increasing}, we have $R_\pi(T)\ge T/4$ under Assumptions~\ref{asp:continuity} with $c=2$ and Assumption~\ref{asp:subgaussian}.
\end{proposition}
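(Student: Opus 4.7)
The plan is a two-point adversarial argument using $f_1(x) = 1 - x$ (maximum $1$ at $x=0$) and $f_2(x) = x$ (maximum $1$ at $x=1$). Both are $1$-Lipschitz and hence satisfy Assumption~\ref{asp:continuity} with $c=2$. Their per-period regrets $r_1(x) = x$ and $r_2(x) = 1-x$ add identically to $1$ on $[0,1]$, which immediately disposes of the non-adaptive case: any arm sequence independent of the observations satisfies $R_{f_1,\pi}(T) + R_{f_2,\pi}(T) = T$, so $\max_i R_{f_i,\pi}(T) \ge T/2 \ge T/4$.

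For an adaptive policy I would exploit Requirement~\ref{rqm:increasing} at the first action. Since $X_1$ is chosen before any reward is observed, its distribution is common to both instances; set $\mu := \E[X_1]$. Monotonicity forces $X_t \ge X_1$ for every $t$, and thus under $f_1$,
\[
R_{f_1,\pi}(T) = \E^{f_1}\Bigl[\sum_{t=1}^T X_t\Bigr] \ge T\mu.
\]
When $\mu \ge 1/4$ we are already done. In the complementary regime $\mu < 1/4$, the policy would need to drive $\E^{f_2}\bigl[\sum_t X_t\bigr] > 3T/4$ in order to keep $R_{f_2,\pi}(T)$ below $T/4$, which can only be rational after $\pi$ has discriminated $f_2$ from $f_1$ from its observations.

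I would close with a Le Cam / Pinsker two-point test against Assumption~\ref{asp:subgaussian}: the KL divergence between the observation laws is controlled by $\tfrac{1}{2\sigma^2}\sum_t \E[(f_1(X_t) - f_2(X_t))^2] = \tfrac{1}{2\sigma^2}\sum_t \E[(2X_t - 1)^2]$, and Pinsker converts this into a total-variation bound that dominates the adaptivity gap $\bigl|\E^{f_1}[\sum_t X_t] - \E^{f_2}[\sum_t X_t]\bigr|$. The monotonicity requirement is precisely what prevents the policy from ``coming back'' to correct an overcommitment on one instance, pinning the adaptive problem to the non-adaptive identity up to constants. The main obstacle will be executing the Le Cam step carefully enough that the final constant is $1/4$ rather than something weaker; for this I expect to introduce the stopping time $\tau = \min\{t : X_t > 1/2\}$ and compare $\E^{f_1}[\tau]$ with $\E^{f_2}[\tau]$, so that the contributions $(\tau-1)/2$ under $f_2$ and $(T-\tau+1)/2$ under $f_1$ cannot simultaneously be small.
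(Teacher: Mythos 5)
There is a fatal gap: the instance pair $f_1(x)=1-x$, $f_2(x)=x$ cannot yield a linear lower bound, for two independent reasons. First, these two functions are maximally distinguishable at the starting arm: pulling $X_t=0$ gives mean reward $1$ under $f_1$ and $0$ under $f_2$. A trivial explore-then-commit policy pulls $x=0$ for $O(\sigma^2\log T)$ rounds, identifies the instance with probability $1-1/T$, and then either stays at $0$ (optimal for $f_1$) or jumps once to $1$ (optimal for $f_2$, and permitted by Requirement~\ref{rqm:increasing} since the sequence only increases). Its regret on both instances is $O(\sigma^2\log T)$, so no Le Cam/Pinsker argument can force $T/4$ here; indeed your own KL bound $\frac{1}{2\sigma^2}\sum_t \E[(2X_t-1)^2]$ is \emph{large} precisely when the policy sits at $x=0$, which is exactly where it wants to learn, so Pinsker gives you nothing. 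Second, and more decisively, both of your functions are monotone, hence quasiconcave, hence satisfy Assumptions~\ref{asp:continuity}--\ref{asp:quasi-concave}; Theorem~\ref{thm:upper-bound} then guarantees that Algorithm~\ref{alg:algorithm} incurs only $\tilde O(T^{3/4})$ regret on each of them, which is below $T/4$ for large $T$. So no proof of a $\Omega(T)$ bound from this pair can exist.

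The idea you are missing is that the two instances must \emph{coincide exactly} on an initial interval $[0,a]$, so that the irreversible decision forced by monotonicity (whether to cross $a$) is made with literally zero distinguishing information --- not merely information that is expensive to acquire. The paper takes $f_1$ peaking at $1/4$ with value $1/2$ and vanishing on $[1/2,1]$, and $f_2$ equal to $f_1$ on $[0,1/2]$ but rising again to a second, higher peak of value $1$ at $3/4$; since $f_1\equiv f_2$ on $[0,1/2]$, the quantity $\E[\sum_t \1_{X_t\le 1/2}]$ is identical under both instances (once the arm exceeds $1/2$ it never returns), and a direct reward accounting gives $R_{f_1,\pi}+R_{f_2,\pi}\ge T/2$ with no concentration or KL argument at all. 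Note that this $f_2$ is deliberately \emph{not} quasiconcave --- it has two separated local maxima --- which is consistent with the proposition being stated without Assumption~\ref{asp:quasi-concave}. Your stopping time $\tau=\min\{t: X_t>1/2\}$ is the right kind of object, but it only has teeth when the laws of the observations up to $\tau$ are identical under the two instances.
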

\begin{figure}[]
    \centering
    \begin{tikzpicture}[scale=0.8]
        \begin{axis}[xmin=0, xmax=1, ymin=0, ymax=1]
            \addplot[domain=0:0.25, red] {2*x};
            \addplot[domain=0.25:0.5, red] {1-2*x};
            \addplot[domain=0.5:1, red] {0};
        \end{axis}
    \end{tikzpicture}
    \begin{tikzpicture}[scale=0.8]
        \begin{axis}[xmin=0, xmax=1, ymin=0, ymax=1]
            \addplot[domain=0:0.25, red] {2*x};
            \addplot[domain=0.25:0.5, red] {1-2*x};
            \addplot[domain=0.5:0.75, red] {4*x-2};
            \addplot[domain=0.75:1, red] {4-4*x};
        \end{axis}
    \end{tikzpicture}
    \caption{Two functions ($f_1(x)$ in the left and $f_2(x)$ in the right) that any policy cannot differentiate before any $t$ such that $X_t\le 0.5$ under Requirement~\ref{rqm:increasing}.}
    \label{fig:two-func}
\end{figure}
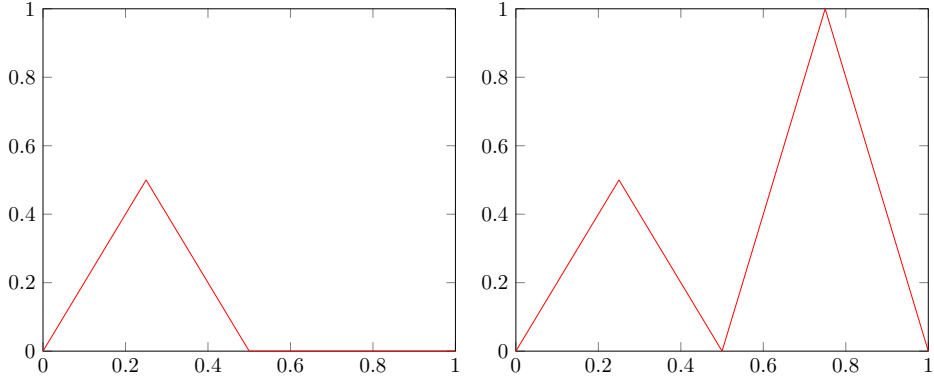
The result can be easily illustrated by the two functions ($f_1(x)$ in the left and $f_2(x)$ in the right) in Figure~\ref{fig:two-func}.
Because of Requirement~\ref{rqm:increasing} and the fact that $f_1(x)=f_2(x)$ for $x\in [0,0.5]$,
we must have $\sum_{t=1}^{T}\1_{X_t\le 0.5}$ being equal under $f_1(x)$ and $f_2(x)$ for any given policy $\pi$.
Moreover, it is easy to see that
\begin{equation*}
    R_{f_1,\pi}(T)\ge \frac{T}{2}- \frac{1}{2}\sum_{t=1}^T\1_{X_t\le 0.5},\quad
    R_{f_2,\pi}(T)\ge T- \sum_{t=1}^T\1_{X_t> 0.5}- \frac{1}{2} \sum_{t=1}^T\1_{X_t\le 0.5}.
\end{equation*}
Recall the fact $\sum_{t=1}^T\1_{X_t\le 0.5}+\sum_{t=1}^T\1_{X_t> 0.5}=T$.
The above inequalities lead to $R_{f_1,\pi}(T)+R_{f_2,\pi}(T)\ge T/2$, which implies Proposition~\ref{prop:linear-regret}.

Therefore, to obtain sublinear regret, we need additional assumptions for $f(x)$.
In particular, we assume
\begin{assumption}\label{asp:quasi-concave}
    The function $f(x)$ is quasiconcave, i.e., for any $x_1, x_2, \lambda\in [0,1]$, we have
    \begin{equation*}
        f(\lambda x_1+(1-\lambda) x_2)\ge \min\left\{f(x_1), f(x_2)\right\}.
    \end{equation*}
\end{assumption}
In other words, a quasiconcave function is weakly unimodal and does not have multiple separated local maximums.
Therefore, $f_2(x)$ in Figure~\ref{fig:two-func} is not quasiconcave while $f_1(x)$ is.
Note that a quasiconcave function may well have a plateau around the local maximum.

We impose Assumption~\ref{asp:quasi-concave} on the function class due to two reasons.
First, quasiconcavity is probably the weakest technical assumption one can come up with that is compatible with Requirement~\ref{rqm:increasing}.
It is much weaker than concavity \cite{salem2021taming}, and it is easy to see from Figure~\ref{fig:two-func} that multiple local maximums tend to cause linear regret under Requirement~\ref{rqm:increasing}.
Moreover, unimodal functions are the focus of other studies in the online learning literature \cite{combes2014unimodal,combes2020unimodal}.
Second, objective functions in practice are commonly assumed to be quasiconcave.
For the dynamic pricing example, the revenue function is typically assumed to be quasiconcave in the price \cite{ziya2004relationships}, supported by theoretical and empirical evidence.
In clinical trials, most literature assumes the dose-response curve to be quasiconcave \cite{braun2002bivariate,zhang2006adaptive,riviere2018phase}.

\section{The Algorithm and the Regret Analysis}\label{sec:alg}
In this section, we present the algorithm and analyze its regret.
We first introduce the high-level idea of the algorithm.
The algorithm discretizes $[0,1]$ into $K$ evenly-spaced intervals with end points $\left\{0, 1/K, \dots, (K-1)/K, 1\right\}$, where $K$ is a tunable hyper-parameter.
The algorithm pulls arm $k/K$ for $m$ times sequentially for $k=0,1,\dots$, for a hyper-parameter $m$.
It stops escalating $k$ after a batch of $m$ pulls if the reward of arm $k/K$ is not as good as that of a previous arm.
More precisely, the stopping criterion is that the upper confidence bound for the average reward of arm $k/K$ is less than the lower confidence bound for that of some arm $i/K$ for $i<k$.
When the stopping criterion is triggered, the algorithm always pulls arm $k/K$ afterwards.
The detailed steps of the algorithm are shown in Algorithm~\ref{alg:algorithm}.
\begin{algorithm}
    \caption{Increasing arm sequence}
    \label{alg:algorithm}
    \begin{algorithmic}
        \State Input: $K$, $m$, $\sigma$
        \State Initialize: $k\gets 0$, $t\gets 0$, $S\gets 0$
        \While{$S=0$}\Comment{Stopping criterion}
        \State Choose $X_{t+1}=\dots=X_{t+m}=k/K$ and observe rewards $Z_{t+1},\dots,Z_{t+m}$
        \State Calculate the confidence bounds for arm $k/K$
        \begin{align*}
            UB_k&\gets \frac{1}{m}\sum_{i=t+1}^{t+m} Z_i + \sigma \sqrt{\frac{2\log m}{m}}\\
            LB_k&\gets \frac{1}{m}\sum_{i=t+1}^{t+m} Z_i - \sigma \sqrt{\frac{2\log m}{m}}
        \end{align*}
        \If{$UB_k<LB_i$ for some $i<k$}
        \State $S\gets 1$
        \Else
        \State $t\gets t+m$, $k\gets k+1$
        \EndIf
        \EndWhile
        \State Set $X_t \equiv k/K$ for the remaining $t$
    \end{algorithmic}
\end{algorithm}

The intuition behind Algorithm~\ref{alg:algorithm} is easy to see.
Because of Assumption~\ref{asp:quasi-concave}, the function $f(x)$ is unimodal.
Therefore, when escalating the arms, the algorithm keeps comparing the upper confidence bound of the current arm $k/K$ with the lower confidence bound of the historically best arm.
If the upper confidence bound of $k/K$ is lower, then it means $k/K$ is likely to have passed the ``mode'' of $f(x)$.
At this point, the algorithm stops escalating and keeps pulling $k/K$ for the rest of the horizon.
Clearly, Algorithm~\ref{alg:algorithm} satisfies Requirement~\ref{rqm:increasing}.
The regret of Algorithm~\ref{alg:algorithm} is provided in Theorem~\ref{thm:upper-bound}.
\begin{theorem}\label{thm:upper-bound}
    By choosing $K=\lfloor T^{1/4}\rfloor$ and $m=\lfloor T^{1/2} \rfloor$, the regret of Algorithm~\ref{alg:algorithm} satisfies
    \begin{equation*}
        R_\pi(T)\le \left(3c+ \frac{11}{2}+4\sqrt{3}\sigma \sqrt{\log T}\right) T^{3/4}
    \end{equation*}
    when $T\ge 16$, under Assumptions~\ref{asp:continuity} to~\ref{asp:quasi-concave}.
\end{theorem}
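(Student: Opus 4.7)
The plan is to analyze Algorithm~\ref{alg:algorithm} on a high-probability ``good event'' $\mathcal{G}$ on which every empirical batch mean lies within its confidence band, and bound the regret separately on $\mathcal{G}$ and on $\mathcal{G}^c$. Set $\epsilon:=\sigma\sqrt{2\log m/m}$, the half-width of the bands used by the algorithm. For each batch, the empirical mean of $m$ i.i.d.\ $\sigma$-subgaussian draws deviates from $f(k/K)$ by more than $\epsilon$ with probability at most $2e^{-\log m}=2/m$ by the subgaussian (Hoeffding) inequality, so a union bound over the at most $K+1$ batches ever visited yields $\PR(\mathcal{G}^c)\le 2(K+1)/m$. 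Since $f(X_t),f^*\in[0,1]$ under Assumption~\ref{asp:continuity}, the contribution of $\mathcal{G}^c$ to $R_\pi(T)$ is at most $T\cdot 2(K+1)/m$, which is $O(T^{3/4})$ for the stated choice of $K,m$ (the floor functions cost only constants once $T\ge 16$).

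On $\mathcal{G}$, let $k^{\text{stop}}$ be the value of $k$ when the while-loop exits (with the convention $k^{\text{stop}}=K$ if the criterion never fires). The regret then splits into an \emph{exploration} part (the $(k^{\text{stop}}+1)m$ rounds inside the loop) and a \emph{commitment} part (the remaining rounds, all spent at $k^{\text{stop}}/K$). The exploration part contributes at most $(K+1)m\le 2T^{3/4}$ by the trivial bound $f^*-f(X_t)\le 1$, so the real task is to show that on $\mathcal{G}$,
\begin{equation*}
    f^*-f(k^{\text{stop}}/K)\;\le\; \tfrac{2c}{K}+4\epsilon,
\end{equation*}
from which the commitment part is at most $T\bigl(2c/K+4\epsilon\bigr)=(2c+4\sigma\sqrt{\log T})\,T^{3/4}+o(T^{3/4})$ after plugging in $K,m$.

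The near-optimality of $k^{\text{stop}}/K$ rests on two observations. \emph{(i) No undershoot:} if the criterion fires, i.e.\ $UB_{k^{\text{stop}}}<LB_{i^*}$ for some $i^*<k^{\text{stop}}$, then on $\mathcal{G}$ the chain $f(k^{\text{stop}}/K)\le UB_{k^{\text{stop}}}<LB_{i^*}\le f(i^*/K)$ yields $f(i^*/K)>f(k^{\text{stop}}/K)$. Quasiconcavity (Assumption~\ref{asp:quasi-concave}) then forces $x^*\le k^{\text{stop}}/K$ and $f(x)\le f(k^{\text{stop}}/K)$ for every $x\ge k^{\text{stop}}/K$, so committing to $k^{\text{stop}}/K$ is optimal over $[k^{\text{stop}}/K,1]$ and only a comparison to $f^*$ is needed; the same argument rules out $k^{\text{stop}}\le \lfloor Kx^*\rfloor$ (on $[0,x^*]$, $f$ is non-decreasing, so the trigger cannot hold). \emph{(ii) Bounded overshoot:} for every $k<k^{\text{stop}}$ the criterion failed to fire at step $k$, i.e.\ $UB_k\ge LB_i$ for all $i<k$; reading this through $\mathcal{G}$ gives $f(i/K)-f(k/K)\le 4\epsilon$ for all $i<k<k^{\text{stop}}$. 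Taking $k=k^{\text{stop}}-1$ and $i=\lfloor Kx^*\rfloor$ (in the main case $k^{\text{stop}}\ge \lfloor Kx^*\rfloor+2$) and paying one Lipschitz step at each end, $|f^*-f(\lfloor Kx^*\rfloor/K)|\le c/K$ and $|f((k^{\text{stop}}-1)/K)-f(k^{\text{stop}}/K)|\le c/K$, delivers the desired $f^*-f(k^{\text{stop}}/K)\le 2c/K+4\epsilon$. The boundary cases $k^{\text{stop}}=\lfloor Kx^*\rfloor+1$ and $k^{\text{stop}}=K$ (no firing) are handled by the same inequalities, the former by direct Lipschitz and the latter by applying (ii) at $k=K-1$.

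The main obstacle is step (ii): quasiconcavity alone tells us only that $k^{\text{stop}}/K$ is past the peak, with no quantitative control on how far past. What turns this into the $O(1/K+\epsilon)$ bound is the accumulated ``did not stop'' information from all previous steps, which must be converted via the confidence-band identities into a pointwise comparison $f(\lfloor Kx^*\rfloor/K)\le f((k^{\text{stop}}-1)/K)+4\epsilon$ and then into $f(k^{\text{stop}}/K)$ by one further Lipschitz step. Once this is in place, summing exploration ($\le 2T^{3/4}$), commitment ($\le (2c+4\sigma\sqrt{\log T})T^{3/4}+o(T^{3/4})$), and bad-event ($\le 2T(K+1)/m = O(T^{3/4})$) contributions and absorbing the $\lfloor\cdot\rfloor$ slack into the constants (which is where $T\ge 16$ enters) gives the stated bound.
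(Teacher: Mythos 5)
Your proposal follows essentially the same route as the paper's proof: discretize into $K+1$ grid arms, condition on the good event of probability at least $1-2(K+1)/m$, charge at most $(K+1)m$ to the exploration phase, use quasiconcavity to show the committed arm cannot undershoot the peak, and convert the accumulated ``did not stop'' inequalities into a quantitative overshoot bound for the commitment phase; your direct chaining through the grid point near $x^*$ even gives a $4\epsilon$ overshoot where the paper's chaining through the historically best arm gives $6\epsilon$. The one bookkeeping caveat is that anchoring at $\lfloor Kx^*\rfloor$ (discretization error $1/K$) rather than the nearest grid point (error $1/(2K)$) makes your total Lipschitz contribution $2cT/K\le 4cT^{3/4}$ instead of the stated $3cT^{3/4}$, which does not dominate the claimed constant for all $c$; switching to the nearest grid point (and treating the resulting extra boundary case by a direct Lipschitz bound) recovers the exact constants.
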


We note that to achieve the regret in Theorem~\ref{thm:upper-bound}, the algorithm requires the information of $T$ and $\sigma$ but not the Lipschitz constant $c$.
The requirement for $\sigma$ may be relaxed using data-dependent confidence intervals such as bootstrap, proposed in \cite{kveton2019garbage,hao2019bootstrapping}.
The requirement for $T$ is typically relaxed using the well-known doubling trick.
However, it may not work in this case because of Requirement~\ref{rqm:increasing}: once $T$ is doubled, the arm cannot start from $X_t=0$ again.
It remains an open question whether Algorithm~\ref{alg:algorithm} can be extended to the case of unknown $T$.

Next we introduce the major steps of the proof of Theorem~\ref{thm:upper-bound}.
The discretization error of pulling arms $k/K$, $k\in \left\{0,\dots,K\right\}$, is of the order $O(T/K)$.
Therefore, choosing $K=O(T^{1/4})$ controls the regret at the desired rate.
The total number of pulls ``wasted'' in this stage is $O(mK)=O(T^{3/4})$.
We consider the good event, on which $f(k/K)$ is inside the confidence interval $[LB_k,UB_k]$ for all $k=0,1,\dots$ until the stopping criterion is triggered.
We choose $m$ such that for a given $k$ this happens with a probability no less than $1-O(1/m)$.
Therefore, applying the union bound to at most $K$ discretized arms, the probability of the good event is no less than $1-O(K/m) = 1- O(T^{-1/4})$.
As a result, the regret incurred due to the bad event is bounded by $TO(T^{-1/4})=O(T^{3/4})$, matching the desired rate.
On the good event, the stopping criterion is triggered because arm $k/K$ has passed $\argmax_{x\in [0,1]}f(x)$.
But it is not much worse than the optimal arm because otherwise the stopping criterion would have been triggered earlier.
The gap between $f(k/K)$ and $f(x^*)$ is controlled by the width of the confidence interval, $O(\sqrt{\log m/m})$.
Multiplying it by $T$ (because $k/K$ is pulled for the rest of the horizon once the stopping criterion is triggered) gives $O(T\sqrt{\log m/m})=O(T^{3/4}\sqrt{\log T})$.
Putting the pieces together gives the regret bound in Theorem~\ref{thm:upper-bound}.

\section{Lower Bound for the Regret}\label{sec:lower-bound}
In the classic continuum-armed bandit problem, without Requirement~\ref{rqm:increasing}, it is well known that there exist policies and algorithms that can achieve regret $O(T^{2/3})$,
which has also been shown to be tight \cite{kleinberg2004nearly,auer2007improved,slivkins2011contextual}.
In this section, we show that in the presence of Requirement~\ref{rqm:increasing}, no policies can achieve regret less than $\Omega(T^{3/4})$ under Assumptions~\ref{asp:continuity}, \ref{asp:subgaussian} and \ref{asp:quasi-concave}.
\begin{theorem}[Lower Bound]\label{thm:lower-bound}
    When $T\ge 16$, under Assumption~\ref{asp:continuity} with $c=1$, Assumptions~\ref{asp:subgaussian} and \ref{asp:quasi-concave}, we have
    \begin{equation*}
        R_\pi(T)\ge \frac{1}{32} T^{3/4},
    \end{equation*}
    for any policy $\pi$ satisfying Requirement~\ref{rqm:increasing}.
\end{theorem}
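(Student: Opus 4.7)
The plan is to construct a family of quasiconcave, $1$-Lipschitz functions $\{f_j\}$ for which any policy obeying Requirement~\ref{rqm:increasing} must incur regret $\Omega(T^{3/4})$. The construction departs from the classical ``needle in a haystack'' of~\cite{kleinberg2008multi}: rather than hiding a bump, each function has a unique peak at a slightly shifted location, and monotonicity is leveraged through a stopping time that appears naturally inside the KL decomposition of the trajectory laws.

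\textbf{Construction.} Set $K := \lfloor T^{1/4}\rfloor$, $\epsilon := 1/(4K)$, and $a_j := 1/4 + j\epsilon$ for $j \in \{0,\ldots,K-1\}$. Define the tent function
\begin{equation*}
    f_j(x) := \max\bigl\{0,\,\min\{x,\, 2a_j - x\}\bigr\},\qquad x\in[0,1].
\end{equation*}
Each $f_j$ is concave (hence quasiconcave), $1$-Lipschitz, and attains its unique maximum $a_j$ at $x=a_j$. The two structural facts that drive the argument are $f_j \equiv f_{j+1}$ on $[0,a_j]$ and $|f_{j+1}-f_j|\le 2\epsilon$ pointwise. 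I will take the rewards to be Gaussian with mean $f_j(X_t)$ and variance $\sigma^2$, which is admissible under Assumption~\ref{asp:subgaussian} and saturates the subgaussian KL inequality.

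\textbf{Stopping time and KL bound.} For each $j$, set $\tau_j := \inf\{t\le T : X_t > a_j\}$ (or $T+1$ if empty). Under Requirement~\ref{rqm:increasing}, the arms are non-decreasing, giving the pathwise identity $\{s<\tau_j\}=\{X_s\le a_j\}$; this makes $\tau_j$ a stopping time of the natural filtration of $(X_t,Z_t)$. Without monotonicity, the event $\{X_s\le a_j\}$ would not describe a prefix in $s$ and this identity would fail—this is exactly where the increasing-sequence requirement enters nontrivially. A chain-rule computation for KL, using the Gaussian per-step formula $(\mu_1-\mu_2)^2/(2\sigma^2)$ and the agreement of $f_j$ and $f_{j+1}$ before $\tau_j$, yields
\begin{equation*}
    \infdiv{\PR_j}{\PR_{j+1}} \;\le\; \frac{2\epsilon^2}{\sigma^2}\,\E_j[T-\tau_j+1].
\end{equation*}

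\textbf{Reduction to regret.} Consider the event $\mathcal{E}_j := \{X_{\lceil T/2\rceil} \ge a_j+\epsilon/2\}$. By monotonicity, $\mathcal{E}_j$ forces every arm after the midpoint to exceed $a_j+\epsilon/2$, while $\mathcal{E}_j^c$ forces every arm up to the midpoint to lie strictly below $a_j+\epsilon/2$. Under $f_j$ on $\mathcal{E}_j$, each of the last $T/2$ pulls incurs instantaneous regret at least $\epsilon/2$ (decreasing side), yielding $\ge T\epsilon/4$; under $f_{j+1}$ on $\mathcal{E}_j^c$, each of the first $T/2$ pulls lies more than $\epsilon/2$ below the optimum $a_{j+1}$, again yielding $\ge T\epsilon/4$. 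Bretagnolle--Huber gives
\begin{equation*}
    \PR_j(\mathcal{E}_j)+\PR_{j+1}(\mathcal{E}_j^c) \;\ge\; \tfrac12\exp\!\bigl(-\infdiv{\PR_j}{\PR_{j+1}}\bigr),
\end{equation*}
so once a pair $(j,j+1)$ has KL bounded by a universal constant, one of $R_{f_j}(T)$ or $R_{f_{j+1}}(T)$ is at least a constant times $T\epsilon = \Theta(T^{3/4})$; tracking the constants with $\epsilon=1/(4K)$ yields the declared $1/32$. To exhibit such a pair, I will pigeonhole over $j\in\{0,\ldots,K-1\}$: either some pair has small KL and the Le Cam step above concludes, or every pair has large KL, in which case the displayed inequality forces $\E_j[T-\tau_j]\ge\Omega(1/\epsilon^2)=\Omega(T^{1/2})$ for every $j$, and a quasiconcavity-plus-monotonicity argument converts enough of these past-$a_j$ excursions into per-period regret $\Omega(\epsilon)$ under some $f_{j^\star}$ to yield $\Omega(K\cdot T^{1/2}\cdot\epsilon)=\Omega(T^{3/4})$.

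\textbf{Main obstacle.} The delicate step is the ``large-KL'' case: a policy can in principle keep $\E_j[T-\tau_j]$ large while every excursion $X_t-a_j$ past the peak is tiny, so that per-period regret stays $\ll \epsilon$. Ruling this out requires either refining $\mathcal{E}_j$ (replace $a_j+\epsilon/2$ by $a_{j+1}$ and chain the argument across consecutive $j$), or using that $|f_{j+1}-f_j|\le 2\epsilon$ implies the KL budget for distinguishing \emph{all} $K$ adjacent pairs forces enough samples past \emph{some} $a_j$ to be at distance $\ge\epsilon$. Threading the constants through this chain—so that the bound remains $T^{3/4}/32$ rather than a loose $\Omega(T^{3/4})$—is what demands the most care.
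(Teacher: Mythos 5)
Your construction and your first branch are sound: the tent functions are admissible, the observation that monotonicity turns $\{X_s\le a_j\}$ into a prefix event is the right one, and the Le Cam/Bretagnolle--Huber step with the midpoint event $\mathcal{E}_j$ correctly yields $R_{f_j}+R_{f_{j+1}}\gtrsim T\epsilon$ \emph{whenever} $\infdiv{\PR_j}{\PR_{j+1}}=O(1)$. The genuine gap is the second branch of your dichotomy, and you have correctly diagnosed it yourself: when every adjacent pair has large KL, all you know is that $\E_j[T-\tau_j]\gtrsim 1/\epsilon^2$ for every $j$, i.e.\ many pulls exceed $a_j$. This does not lower-bound regret, because a policy can hover at $a_j+\delta$ for arbitrarily small $\delta$: each such pull contributes $4\delta^2/(2\sigma^2)$ to the KL only after $\Theta(1/\delta^2)$ pulls, yet contributes only $\delta$ to the regret under $f_j$, so ``occupation time past the peak'' and ``regret'' are not commensurable. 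Worse, the natural repair --- bounding the KL by the regret via $|f_j-f_{j+1}|\le 2\min\{(x-a_j)^+,\epsilon\}$ --- gives only $\infdiv{\PR_j}{\PR_{j+1}}\lesssim \epsilon R_{f_j}$, so assuming $R_{f_j}<T^{3/4}/32$ still permits $\mathrm{KL}\sim T^{1/2}$, and the Le Cam branch never activates. Neither of your two proposed patches is carried out, and it is not clear either closes this loop; as written the proof is incomplete.

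The paper avoids this trap by choosing a different pair of quantities to play off against each other. Instead of comparing adjacent peaks through the region \emph{above} $a_j$, it compares $f_i$ against the far alternative $f_K$ through the occupation time $T_{i}$ of the sub-peak interval $[(i-1)/K,\,i/K)$, together with the stopping time $\tau_i=\max\{t: X_t< i/K\}$ (which is a stopping time only because of Requirement~\ref{rqm:increasing}). That occupation time has three properties simultaneously: (i) by monotonicity it is identical under every $f_k$ with $k>i$, since those functions agree below $i/K$ and the policy never returns; (ii) it controls the KL of the \emph{stopped} laws, because below $i/K$ the functions $f_i$ and $f_K$ differ only on that one interval and by at most $1/K$; and (iii) it directly lower-bounds $R_K$ at a \emph{constant} cost per period, since that interval sits far below $f_K$'s peak. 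This yields the clean dichotomy: either $\sum_{i\le \lfloor K/2\rfloor}T_i\gtrsim T^{3/4}$, so $R_K\gtrsim T^{3/4}$; or some $T_i\lesssim T^{1/2}$, whence $T_{i,i}\le T_i+\frac{T}{2K}\sqrt{T_i}$ is a small fraction of $T$ and $R_i\ge\frac{1}{2K}(T-T_{i,i})\gtrsim T^{3/4}$, because every period spent \emph{outside} the peak interval of $f_i$ costs at least $1/(2K)$. If you want to salvage your adjacent-pair scheme, you would need to replace ``time past $a_j$'' by a quantity that is both a KL budget and a regret certificate in this way; the paper's choice is one such quantity, and your current one is not.
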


Compared to the literature, Theorem~\ref{thm:lower-bound} implies that Requirement~\ref{rqm:increasing} substantially impedes the learning efficiency by increasing the rate of regret from $T^{2/3}$ to $T^{3/4}$.
The proof for Theorem~\ref{thm:lower-bound} is significantly different from that in the literature.
In particular, a typical construction for the lower bound in the continuum-armed bandit literature involves a class of functions that have ``humps'' in a small interval and are zero elsewhere,
as illustrated in the left panel of Figure~\ref{fig:lb}.
The idea is to make the functions indistinguishable except for two small intervals.
In our case, it is not sufficient to achieve $O(T^{3/4})$.
To utilize Requirement~\ref{rqm:increasing}, the constructed functions in the right panel of Figure~\ref{fig:lb} share the same pattern before reaching the mode.
Afterwards, the functions diverge.
The objective is to make it hard for the decision maker to decide whether to keep escalating the arm or stop, and punish her severely if a wrong decision is made.
(Consider stopping at the mode of the red function while the actual function is yellow.)
Such regret is much larger than the hump functions and thus gives rise to the higher rate.
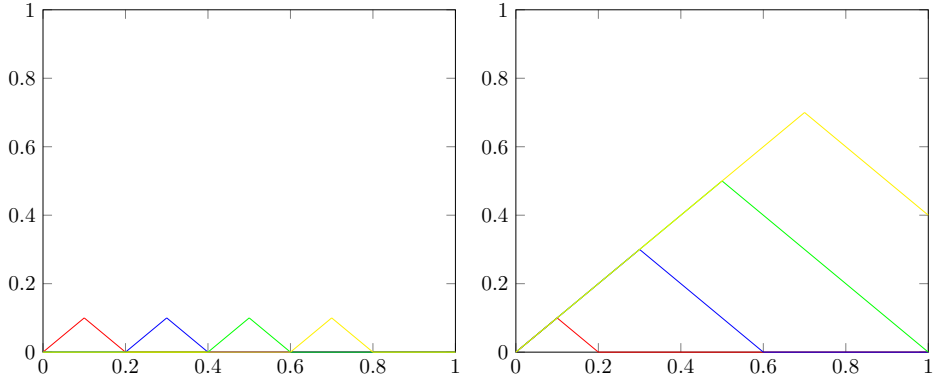
\begin{figure}[]
    \centering
    \begin{tikzpicture}[scale=0.8]
        \begin{axis}[xmin=0, xmax=1, ymin=0, ymax=1]
            \addplot[domain=0:0.1, red] {x};
            \addplot[domain=0.1:0.2, red] {0.2-x};
            \addplot[domain=0.2:1, red] {0};
            \addplot[domain=0:0.2, blue] {0};
            \addplot[domain=0.2:0.3, blue] {x-0.2};
            \addplot[domain=0.3:0.4, blue] {0.4-x};
            \addplot[domain=0.4:1, blue] {0};
            \addplot[domain=0:0.4, green] {0};
            \addplot[domain=0.4:0.5, green] {x-0.4};
            \addplot[domain=0.5:0.6, green] {0.6-x};
            \addplot[domain=0.6:1, green] {0};
            \addplot[domain=0:0.6, yellow] {0};
            \addplot[domain=0.6:0.7, yellow] {x-0.6};
            \addplot[domain=0.7:0.8, yellow] {0.8-x};
            \addplot[domain=0.8:1, yellow] {0};
        \end{axis}
    \end{tikzpicture}
    \begin{tikzpicture}[scale=0.8]
        \begin{axis}[xmin=0, xmax=1, ymin=0, ymax=1]
            \addplot[domain=0:0.1, red] {x};
            \addplot[domain=0.1:0.2, red] {0.2-x};
            \addplot[domain=0.2:1, red] {0};
            \addplot[domain=0:0.3, blue] {x};
            \addplot[domain=0.3:0.6, blue] {0.6-x};
            \addplot[domain=0.6:1, blue] {0};
            \addplot[domain=0.0:0.5, green] {x};
            \addplot[domain=0.5:1, green] {1-x};
            \addplot[domain=0:0.7, yellow] {x};
            \addplot[domain=0.7:1, yellow] {1.4-x};
        \end{axis}
    \end{tikzpicture}
    \caption{The constructed instances for the lower bound in continuum-armed bandit (left panel) and Theorem~\ref{thm:lower-bound} (right panel).}
    \label{fig:lb}
\end{figure}

Besides the construction of the instances, the proof relies on the following random time
\begin{equation*}
    \tau = \max\left\{t|X_t\le a\right\}
\end{equation*}
for some constant $a$.
Note that $\tau$ is typically not a stopping time for a stochastic process $X_t$.
Under Requirement~\ref{rqm:increasing}, however, it is a stopping time.
This is the key observation that helps us to fully utilize the monotonicity.
Technically, this allows us to study the Kullback-Leibler divergence for the probability measures on $(X_1,Z_1,\dots,X_{\tau},Z_{\tau},\tau)$.
By decomposing the KL-divergence on the events $\left\{\tau=1\right\}$, $\left\{\tau=2\right\}$, and so on, we can focus on the periods when $X_t\le a$, which cannot possibly distinguish functions whose mode are larger than $a$ (right panel of Figure~\ref{fig:lb}).
This observation and innovation is essential in the proof of Theorem~\ref{thm:lower-bound}.

\section{Numerical Experiment}\label{sec:numerical}
In this section, we conduct numerical studies to compare the regret of Algorithm~\ref{alg:algorithm} with a few benchmarks.
The experiments are conducted on a Windows 10 desktop with Intel i9-10900K CPU.

To make a fair comparison, we generate the instances randomly so that it doesn't bias toward our algorithm.
In particular, we consider $T\in \left\{1000, 11000, 21000, \dots, 101000\right\}$.
For each $T$, we simulate $100$ independent instances.
For each instance, we generate $x_1\sim U(0,1)$ and $x_2\sim U(0.5, 1)$, and let $f(x)$ be the linear interpolation of the three points $(0,0)$, $(x_1,x_2)$, and $(1,0)$.
The reward $Z_t\sim \mathcal N(f(X_t), 0.1)$ for all instances and $T$.\footnote{Here $U(a,b)$ denotes a uniform random variable between in $[a,b]$; $\mathcal N(a,b)$ denotes a normal random variable with mean $a$ and standard deviation $b$.}
We consider the following algorithms for each instance:
\begin{enumerate}
    \item Algorithm~\ref{alg:algorithm}. As stated in Theorem~\ref{thm:upper-bound}, we use $K=\lfloor T^{1/4}\rfloor$, $m=\lfloor T^{1/2} \rfloor$ and $\sigma=0.1$.
    \item The classic UCB algorithm for the continuum-armed bandit. We first discretize the interval $[0,1]$ into $K+1$ equally spaced grid points, $\left\{k/K\right\}_{k=0}^K$, where $K=\lfloor T^{1/3}\rfloor$ according to \cite{kleinberg2004nearly}. Then we implement the version of UCB in Chapter~8 of \cite{lattimore2020bandit} on the multi-armed bandit problem with $K+1$ arms. More precisely, we have
        \begin{align*}
            X_t &=\argmax_{k/K}\left( UCB_k(t)\right)\\
            UCB_k(t) &= \begin{cases}
                1 & T_{k}(t-1)=0\\
                \hat\mu_{k}(t-1)+\sigma\sqrt{ \frac{2\log(1+t\log(t)^2)}{T_k(t-1)}} & T_k(t-1)\ge 1
            \end{cases}
        \end{align*}
        where $\mu_{k}(t-1)$ is the empirical average of arm $k$, $x=k/K$, in the first $t-1$ periods; $T_k(t-1)$ is the number that arm $k$ is chosen in the first $T-1$ periods.
        Note that in this benchmark we do not impose Requirement~\ref{rqm:increasing}.
    \item UCB with Requirement~\ref{rqm:increasing}. We follow the same discretization scheme and UCB definition in the last benchmark, except that we impose $X_{t+1}\ge X_t$. That is
        \begin{equation*}
            X_t =\max\{X_{t-1}, \argmax_{k/K}\left( UCB_k(t)\right)\}.
        \end{equation*}
    \item Deflating UCB with Requirement~\ref{rqm:increasing}. To improve the performance of UCB with Requirement~\ref{rqm:increasing}, we need to make the algorithm more likely to stick to the current arm. Moreover, when exploring new arms, we need to encourage the algorithm to explore the arm slightly larger than the current arm.
        We design a heuristic for this purpose by deflating the UCB initialization of the arms.
        More precisely, we let
        \begin{equation*}
            UCB_k(t ) = 1- k/K,\; \text{if }T_k(t-1)=0.
        \end{equation*}
        In this way, the decision maker is encouraged to explore arms with small $k$ first and gradually escalate the arms.
\end{enumerate}
\begin{figure}[]
    \centering
    \begin{tikzpicture}
        \begin{axis}[xmin=1000, xmax=101000, ymin=10, ymode=log, legend pos = south east, xlabel=$T$, ylabel=Regret]
            \addplot[blue] table[x=T, y=alg1_reg, col sep=comma] {./data.csv};
            \addplot[black] table[x=T, y=UCB_reg, col sep=comma] {./data.csv};
            \addplot[green] table[x=T, y=UCB_inc_reg, col sep=comma] {./data.csv};
            \addplot[red] table[x=T, y=UCB_def_reg, col sep=comma] {./data.csv};
            \legend{Algorithm~\ref{alg:algorithm}, UCB, UCB Requirement~\ref{rqm:increasing}, Deflating UCB}
        \end{axis}
    \end{tikzpicture}
    \caption{The average regret of Algorithm~\ref{alg:algorithm} and three benchmarks.}
    \label{fig:numerical}
\end{figure}
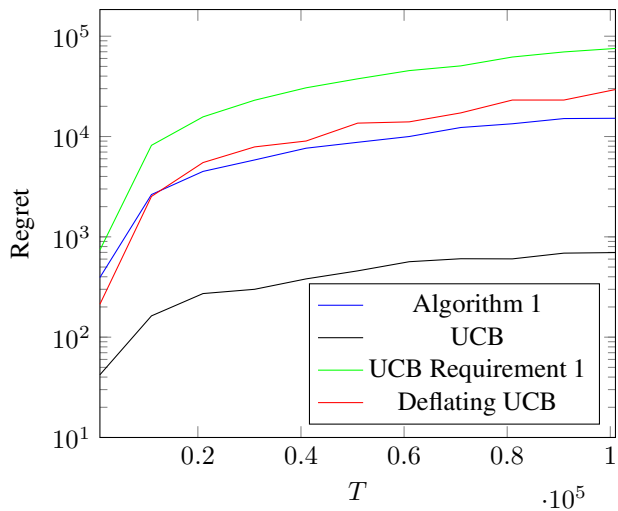
The average regret over the 100 instances is shown in Figure~\ref{fig:numerical}.
We may draw the following observations from the experiment.
First, the regret of UCB is significantly less than the other three algorithms which are constrained by Requirement~\ref{rqm:increasing}.
Therefore, the cost brought to the learning efficiency by Requirement~\ref{rqm:increasing} is huge.
Second, simply imposing Requirement~\ref{rqm:increasing} to the UCB algorithm doesn't perform well (the green curve).
It more than triples the regret of Algorithm~\ref{alg:algorithm}.
Third, the deflating UCB heuristic seems to perform well. Its regret is on par with that of Algorithm~\ref{alg:algorithm}, although the trend implies a fast scaling with $T$.
The regret analysis and the optimal choice of the deflating factor remain an open question.
Overall, Algorithm~\ref{alg:algorithm} performs the best among the algorithms that are constrained by Requirement~\ref{rqm:increasing}.
The result is consistent with the theoretical studies.

\section{Conclusion}\label{sec:conclusion}
In this study, we investigate the online learning or multi-armed bandit problem, when the chosen actions are ordinal and required to be monotone over time.
The requirement appears naturally in several applications: firms setting markup pricing policies in order not to antagonize early adopters and clinical trials following dose escalation principles.
It turns out that the requirement may severely limit the capability of the decision maker to learn the objective function: the worst-case regret is linear in the length of the learning horizon, when the objective function is just continuous.
However, we show that if the objective function is also quasiconcave, then the proposed algorithm can achieve sublinear regret $\tilde O(T^{3/4})$.
It is still worse than the optimal rate in the continuum-armed bandit literature, but is proved to be optimal in this setting.

The study has a few limitations and opens up potential research questions.
First, since the doubling trick no longer works, the assumption of knowing $T$ in advance cannot be simply relaxed.
It is not clear if one can design a new algorithm that can achieve the optimal regret even when $T$ is unknown.
Second, it is unclear if the monotone requirement can be incorporated in contextual bandits.
This is a highly relevant question in clinical trials, which increasingly rely on
patient characteristics for the dose allocation.
A novel framework is needed to study how the dose escalation principle plays out in this setting.

\bibliographystyle{plain}
\bibliography{ref}

\appendix
\section{Proofs}
\begin{proof}[Proof of Theorem~\ref{thm:upper-bound}:]
Let $x^{*}\in \argmax_{x\in[0,1]} f(x)$.
Because $K=\lfloor T^{1/4}\rfloor$, we can find $k^*$ such that $|x^*-k/K|\le 1/(2\lfloor T^{1/4}\rfloor)$.
Therefore, we have
\begin{align}
    R_{\pi}(T) &= Tf(x^*)- \sum_{t=1}^T\E[f(X_t)]\notag\\
               &= Tf(x^*)- Tf(k^*/K) + (Tf(k^*/K)-\sum_{t=1}^T\E[f(X_t)])\notag\\
               &\le Tc|x^*-k/K|+ (Tf(k^*/K)-\sum_{t=1}^T\E[f(X_t)])\notag\\
               &= \frac{cT}{2K}+(Tf(k^*/K)-\sum_{t=1}^T\E[f(X_t)]).\label{eq:disc-error}
\end{align}
Here the first inequality is due to Assumption~\ref{asp:continuity}.
Next we can focus on $Tf(k^*/K)-\sum_{t=1}^T\E[f(X_t)])$, the regret relative to the best arm in $\left\{0, 1/K, \dots, (K-1)/K, 1\right\}$.

Consider $m$ IID random rewards $Z_{k,1}'$, \dots, $Z_{k,m}'$ having the same distribution as $Z_t$ when $X_t=k/K$, for $k=0, \dots,K$.
Let $\bar Z_{k}'= \frac{1}{m} \sum_{t=1}^m Z_{k,m}'$.
Consider the event $E$ as
\begin{equation*}
    E\coloneqq \left\{ \bar Z_k'- \sigma \sqrt{ \frac{2\log m}{m}}\le f(k/K)\le \bar Z_k'+ \sigma \sqrt{ \frac{2\log m}{m}},\; \forall k\in \left\{0,\dots,K\right\}\right\}.
\end{equation*}
If we couple $Z_{k,i}'$, $i=1,\dots,m$, with the rewards generated in the algorithm pulling arm $k/K$, then the event represents the high-probability event that the $f(k/K)$ is inside the confidence interval $[LB_k, UB_k]$.
Using the standard concentration bounds for subgaussian random variables (note that $\bar Z_{k}'$ is $\sigma/\sqrt{m}$-suggaussian), we have
\begin{align}\label{eq:prob-Ec}
    \PR( E^c)&\le \cup_{k=0}^K \PR\left(|\bar Z_k'-\E[\bar Z_k']|> \sigma\sqrt{ \frac{2\log m}{m}}\right)\notag\\
             &\le (K+1) 2\exp\left\{ - \frac{m}{2\sigma^2}\times \frac{2\sigma^2 \log m}{m}\right\}= \frac{2(K+1)}{m}.
\end{align}
Based on the event $E$, we can decompose the regret as
\begin{align}
    Tf(k^*/K)-\sum_{t=1}^T\E[f(X_t)] &= \sum_{t=1}^T\E\left[ (f(k^*/K)- f(X_t))\1_{E}\right]+\sum_{t=1}^T\E\left[ (f(k^*/K)- f(X_t))\1_{E^c}\right]\notag\\
    &\le \sum_{t=1}^T\E\left[ (f(k^*/K)- f(X_t))\1_{E}\right] + T \PR(E^c)\notag\\
    &\le \sum_{t=1}^T\E\left[ (f(k^*/K)- f(X_t))\1_{E}\right] + \frac{2T(K+1)}{m},\label{eq:remaining-E-regret}
\end{align}
where the first inequality follows from $f(k^*/K)\le 1$ and the second inequality follows from \eqref{eq:prob-Ec}.

Next we analyze the first term of \eqref{eq:remaining-E-regret}.
Suppose $T_1$ is the stopping time when the stopping criterion $S\to 1$ is triggered in Algorithm~\ref{alg:algorithm}.
We can divide the horizon into two phases $[0, T_1]$ and $[T_1+1, T]$.
Before the stopping criterion, the first term of \eqref{eq:remaining-E-regret} is bounded by
\begin{equation}\label{eq:regret-first-phase}
    \E\left[\sum_{t=1}^{T_1}\E\left[ (f(k^*/K)- f(X_t))\1_{E}\right]\right]\le (K+1)m f(k^*/K)\le (K+1)m
\end{equation}
To analyze the second phase, since we can couple the random variables $Z_{k,m}'$ and the rewards of arm $k/K$, we can suppose that $LB_k\le f(k/K)\le UB_k$ on event $E$ for all $k$ during Algorithm~\ref{alg:algorithm}.
Note that when the stopping criterion $S\gets 1$ is triggered for some arm $k/K$ in Algorithm~\ref{alg:algorithm}, we must have
\begin{equation}\label{eq:stopping-triggered}
    f(k/K)\le UB_k<LB_i\le f(i/K)
\end{equation}
for some $i<k$.
Note that \eqref{eq:stopping-triggered}, combined with Assumption~\ref{asp:quasi-concave}, implies that $x^*\le k/K$.
Otherwise we have $f(x^*)\ge f(i/K)>f(k/K)$ while $i/K<k/K <x^*$, which contradicts Assumption~\ref{asp:quasi-concave}.
This fact then implies that $k^*\le k$ and furthermore $k^*\le k-1$ because $f(k/K)\le f(i/K)$.

Because the stopping criterion is triggered for the first time, it implies that
\begin{align}\label{eq:first-triggered}
    f((k-1)/K)&\ge LB_{k-1}= UB_{k-1}- 2\sigma\sqrt{\frac{2\log m}{m}}\ge LB_{i}- 2\sigma\sqrt{\frac{2\log m}{m}}\notag\\
           &\ge UB_{i}- 4\sigma\sqrt{\frac{2\log m}{m}}\ge f(i/K)- 4\sigma\sqrt{\frac{2\log m}{m}}.
\end{align}
Here the first inequality is due to event $E$.
The first equality is due to the definition of $UB$ and $LB$.
The second inequality is due to the fact that the stopping criterion is not triggered for arm $k'/K$.
The last inequality is again due to event $E$.
Moreover, because arm $i/K$ is historically the best among $\left\{0,1/K,\dots,(k-1)/K\right\}$, we have
\begin{equation}\label{eq:first-triggered2}
    f(i/K)\ge LB_i\ge LB_{k'}=UB_{k'}-2\sigma\sqrt{\frac{2\log m}{m}}\ge f(k'/K)- 2\sigma\sqrt{\frac{2\log m}{m}}
\end{equation}
for all $0\le k'\le k-1$.
Now \eqref{eq:first-triggered} and \eqref{eq:first-triggered2}, combined with $k^*\le k-1$, imply that
\begin{align*}
    f\left( \frac{k-1}{K}\right)\ge f(i/K)- 4\sigma\sqrt{\frac{2\log m}{m}}
                                \ge f(k^*/K)- 6\sigma\sqrt{\frac{2\log m}{m}}.
\end{align*}
By Assumption~\ref{asp:continuity}, we then have
\begin{equation*}
    f(k/K)\ge f\left( \frac{k-1}{K}\right)- \frac{c}{K}\ge f(k^*/K)-\frac{c}{K}-6\sigma\sqrt{\frac{2\log m}{m}}.
\end{equation*}
Plugging the last inequality back into the first term of \eqref{eq:remaining-E-regret} in the second phase, we have
\begin{equation}\label{eq:second-phase}
    \E\left[\sum_{t=T_1+1}^{T}\E\left[ (f(k^*/K)- f(X_t))\1_{E}\right]\right]\le T (f(k^*/K)-f(k/K))\le \frac{cT}{K}+ 6\sigma\sqrt{\frac{2\log m}{m}}T.
\end{equation}
Combining \eqref{eq:disc-error}, \eqref{eq:remaining-E-regret}, \eqref{eq:regret-first-phase} and \eqref{eq:second-phase}, we have
\begin{align*}
    R_\pi(T)&\le \frac{cT}{2K} + \frac{2(K+1)T}{m} + (K+1)m+ \frac{cT}{K}+ 6\sigma\sqrt{\frac{2\log m}{m}}T\\
            &\le 3cT^{3/4}+ 4T^{3/4}+ \frac{3}{2}T^{3/4}+ 4\sqrt{3}\sigma \sqrt{\log T}T^{3/4}\\
            &\le \left(3c+ \frac{11}{2}+ 4\sqrt{3}\sigma\sqrt{\log T}\right) T^{3/4},
\end{align*}
where we have plugged in $K= \lfloor T^{1/4}\rfloor$ and $m=\lfloor T^{1/2}\rfloor$, and moreover,
\begin{align*}
    K\le T^{1/4}\le 2K,\; K+1 \le \frac{3}{2}T^{1/4}, \; \frac{3}{4} T^{1/2}\le T^{1/2}-1\le m\le T^{1/2}, \;
\end{align*}
because $T\ge 16$.
This completes the proof.

\end{proof}

\begin{proof}[Proof of Theorem~\ref{thm:lower-bound}:]
    Let $K= \lfloor T^{1/4}\rfloor$ and construct a family of functions $f_k(x)$ as follows.
    For $k\in [K]$, let
    \begin{equation*}
        f_k(x) = \begin{cases}
            x & x\in [0, (k-1/2)/K)\\
            \max\{(2k-1)/K - x,0\} & x\in [(k-1/2)/K, 1]
        \end{cases}
    \end{equation*}
    As a result, we can see that $\max_{x\in [0,1]}f_k(x)= (k-1/2)/K$ is attained at $x=(k-1/2)/K$.
    Clearly, all the functions satisfy Assumption~\ref{asp:continuity} with $c=1$ and Assumption~\ref{asp:quasi-concave}.
    For each $f_k(x)$, we construct the associated reward sequence by
    $Z_t\sim \mathcal N(f_k(X_t),1)$, which is a normal random variable with mean $f_k(X_t)$ and standard deviation 1.
    It clearly satisfies Assumption~\ref{asp:subgaussian}.

    Consider a particular policy $\pi$. Let
    \begin{equation*}
        R_k \coloneqq R_{f_k, \pi}(T)
    \end{equation*}
    be the regret incurred when the objective function is $f_k(x)$ for $k\in[K]$.
    Because of the construction, it is easy to see that for the objective function $f_k(x)$, if $X_t\notin [(k-1)/K, k/K]$, then a regret no less than $1/(2K)$ is incurred in period $t$.
    Therefore, we have
    \begin{equation}\label{eq:regret-k}
        R_k \ge \frac{1}{2K}\sum_{t=1}^T \E_k[\1_{X_t\notin [(k-1)/K, k/K]}].
    \end{equation}
    Here we use $\E_k$ to denote the expectation taken when the objective function is $f_k(x)$.
    On the other hand, if we focus on $R_K$, then it is easy to see that
    \begin{equation}\label{eq:regret-K}
        R_K \ge \left(\frac{1}{2}- \frac{1}{2K}\right)\sum_{t=1}^T \E_K[\1_{X_t\le \lfloor K/2\rfloor /K}],
    \end{equation}
    because a regret no less than $1/2-1/2K$ is incurred in the periods when $X_t\le 1/2$.

    Based on the regret decomposition in \eqref{eq:regret-k} and \eqref{eq:regret-K}, we introduce $T_{k,i}$ for $k,i\in [K]$ as
    \begin{equation*}
        T_{k,i} = \sum_{t=1}^T \E_k[\1_{X_t\in [(i-1)/K, i/K)}].
    \end{equation*}
    In other words, $T_{k,i}$ is the number of periods in which the policy chooses $x$ from the interval $[(i-1)/K, i/K)$ when the reward sequence is generated by the objective function $f_k(x)$.\footnote{We let $T_{k,K}=\sum_{t=1}^T \E_k[\1_{X_t\in [1-1/K, 1]}]$ include the right end. This is a minor technical point that doesn't affect the steps of the proof.}
    A key observation due to Requirement~\ref{rqm:increasing} is that
    \begin{equation}\label{eq:same-T}
        T_{i+1,i} = T_{i+2,i}=\dots=T_{K,i}.
    \end{equation}
    This is because for $k>i$, the function $f_k(x)$ is identical for $x\le i/K$.
    Before reaching some $t$ such that $X_t> i/K$, the policy must have spent the same number of periods on average in the interval $[(i-1)/K, i/K)$ no matter the objective function is $f_{i+1}(x),\dots, f_{K-1}(x)$, or $f_{K}(x)$.
    But because of Requirement~\ref{rqm:increasing}, once $X_t>i/K$ for some $t$, the policy never pulls an arm in the interval $[(i-1)/K, i/K)$ afterwards.
    Therefore, \eqref{eq:same-T} holds.
    This allows us to simplify the notation by letting $T_i\coloneqq T_{k,i}$ for $k> i$.
    In particular, by \eqref{eq:regret-K}, we have
    \begin{equation}\label{eq:RK}
        R_K \ge \frac{K-1}{2K}\sum_{i=1}^{\lfloor K/2\rfloor} T_{K,i}=\frac{K-1}{2K}\sum_{i=1}^{\lfloor K/2\rfloor} T_{i}.
    \end{equation}

    Next we are going to show the relationship between $T_{k,i}$ (or equivalently $T_i$) and $T_{i,i}$ for $k>i$.
    We introduce a random variable $\tau_i$
    \begin{equation*}
        \tau_i \coloneqq \max\left\{t| X_t< i/K\right\}.
    \end{equation*}
    Because of Requirement~\ref{rqm:increasing}, we have $\{\tau_i \le t\}\in \sigma(X_1, Z_1, X_2,Z_2,\dots, X_{t}, Z_t, U_t)$.\footnote{Recall that $U_t$ is an internal randomizer. Since we can always couple the values of $U_t$ under the two measures, we omit the dependence hereafter.}
    Therefore, $\tau_i$ is a stopping time.
    We consider the two probability measures, induced by the objective functions $f_i(x)$ and $f_k(x)$ respectively, on $(X_1, Z_1,\dots, X_{\tau_i}, Z_{\tau_i}, \tau_i)$.
    Denote the two measures by $\mu_{i,i}$ and $\mu_{k,i}$ respectively.
    Therefore, we have
    \begin{align}
        T_{i,i}-T_{k,i} &= \left(\E_{\mu_{i,i}}\left[\sum_{t=1}^{\tau_i}\1_{X_t\in [(i-1)/K, i/K)}\right]-\E_{\mu_{k,i}}\left[\sum_{t=1}^{\tau_i}\1_{X_t\in [(i-1)/K, i/K)}\right]\right)\notag\\
                      &\le T \sup_{A} (\mu_{i,i}(A)-\mu_{k,i}(A)) \label{eq:total-variation}\\
                      &\le T\sqrt{\frac{1}{2}\infdiv{\mu_{i,i}}{\mu_{k,i}}}.\label{eq:pinsker}
    \end{align}
    Here \eqref{eq:total-variation} follows the definition of the total variation distance and the fact that $\sum_{t=1}^{\tau_i}\1_{X_t\in [(i-1)/K, i/K)}\le T$.
    The second inequality \eqref{eq:pinsker} follows from Pinsker's inequality (see \cite{tsybakov2008introduction} for an introduction) and $\infdiv{P}{Q}$ denotes the Kullback-Leibler divergence defined as
    \begin{equation*}
        \infdiv{P}{Q} = \int \log( \frac{dP}{dQ}) dP.
    \end{equation*}
    We can further bound the KL-divergence in \eqref{eq:pinsker} by:
    \begin{align*}
        \infdiv{\mu_{i,i}}{\mu_{k,i}} &= \sum_{t=1}^T \int_{\tau_i=t} \log \left( \frac{\mu_{i,i}(x_1, z_1, \dots, x_t, z_t)}{\mu_{k,i}(x_1, z_1, \dots, x_t, z_t)}\right)d\mu_{i,i}\notag\\
                                      &= \sum_{t=1}^T \int_{\tau_i=t} \sum_{s=1}^t \log\left( \frac{\mu_{i,i}(z_s|x_s)}{\mu_{k,i}(z_s|x_s)}\right)d\mu_{i,i}\notag\\
                                      &= \sum_{t=1}^T \int_{\tau_i=t} \int_{z_1,\dots,z_t}\sum_{s=1}^t \log\left( \frac{\mu_{i,i}(z_s|x_s)}{\mu_{k,i}(z_s|x_s)}\right)d\mu_{i,i}(z_1,\dots,z_s|x_1, \dots, x_t)d\mu_{i,i}(x_1,\dots,x_t)\notag\\
                                      &= \sum_{t=1}^T \int_{\tau_i=t} \int_{z_1,\dots,z_t}\sum_{s=1}^t  \infdiv{ \mathcal N(f_i(x_s), 1)}{ \mathcal N(f_k(x_s), 1)}d\mu_{i,i}(x_1,\dots,x_t)\notag\\
                                      &= \sum_{t=1}^T \int_{\tau_i=t} \sum_{s=1}^t \frac{1}{2}(f_i(x_s)-f_k(x_s))^2 d\mu_{i,i}(x_1,\dots,x_t)
    \end{align*}
    In the first line we use the fact that the normal reward has support $\R$.
    Hence the sample path $(x_1, z_1, \dots, x_t, z_t)$ would have positve density under $\mu_{i,i}$ then it has positive density under $\mu_{k,i}$.
    As a result, we establish the absolute continuity of $\mu_{i,i}$ w.r.t. $\mu_{k,i}$ and the existence of the adon-Nikodym derivative.
    The second equality follows from the fact that for the same policy $\pi$, we have
    \begin{equation*}
        \mu_{i,i}(x_s|x_1,z_1,\dots,x_{s-1},z_{s-1}) = \mu_{k,i}(x_s|x_1,z_1,\dots,x_{s-1},z_{s-1}).
    \end{equation*}
    The fourth equality uses the conditional independence of $z$ given $x$.
    Note that on the event $\tau_i=t$, we have $x_s< i/K$ for $s\le t$. Therefore, we have
\begin{equation*}
    |f_i(x_s)-f_k(x_s)|\le \begin{cases}
        \frac{1}{K}& x_s\in  [(i-1)/K, i/K)\\
        0& x_s<(i-1)/K
    \end{cases},
\end{equation*}
    by the construction of $f_i(x)$ and $f_k(x)$.
    As a result,
    \begin{align*}
        \infdiv{\mu_{i,i}}{\mu_{k,i}}\le \sum_{t=1}^T \int_{\tau_i=t} \frac{T_{k,i}}{2K^2} d\mu_{i,i}(x_1,\dots,x_t)
                                     \le \frac{T^2_{k,i}}{2K^2}.
    \end{align*}
    Plugging it into \eqref{eq:pinsker}, we have
    Therefore, \eqref{eq:pinsker} implies
    \begin{equation}\label{eq:relation-Tii-Tk}
        T_{i,i}\le T_{k,i}+ \frac{T}{2K}\sqrt{T_{k,i}} = T_i+ \frac{T}{2K}\sqrt{T_{i}}.
    \end{equation}
    Combining \eqref{eq:relation-Tii-Tk} and \eqref{eq:regret-k}, we can provide a lower bound for the regret $R_i$ for $i=1,\dots,\lfloor K/2\rfloor/K$:
    \begin{align}\label{eq:Ri}
        R_i \ge \frac{1}{2K} (T-T_{i,i})\ge \frac{1}{2K}\left(T- T_{i}- \frac{T}{2K}\sqrt{T_{i}}\right).
    \end{align}
    Next, based on \eqref{eq:RK} and \eqref{eq:Ri}, we show that for $k\in \left\{1, \dots, \lfloor K/2\rfloor/K, K\right\}$, there exists at least one $k$ such that
    \begin{equation*}
        R_{k}\ge \frac{1}{32} T^{3/4}.
    \end{equation*}
    If the claim doesn't hold, then we have $R_K\ge T^{3/4}/32$.
    By \eqref{eq:RK} and the pigeonhole principle, there exists at least one $i$ such that
    \begin{equation*}
        T_i\le  \frac{2K}{32(K-1) \lfloor K/2\rfloor} T^{3/4}.
    \end{equation*}
    Because $T\ge 16$ and $K\ge 2$, we have $K/(K-1)\le 2$ and $\lfloor K/2\rfloor \ge T^{1/4}/4$.
    Therefore,
    \begin{equation*}
        T_i\le \frac{1}{2} T^{1/2}.
    \end{equation*}
    Now by \eqref{eq:Ri}, for this particular $i$, we have
    \begin{align}
        R_i&\ge \frac{1}{2K}\left(T- \frac{1}{2} T^{1/2}- \frac{T}{2K} \sqrt{T^{1/2}/2}\right)\notag\\
           &\ge 2T^{-1/4} \left(T- \frac{1}{2}T^{1/2} - \frac{\sqrt{2}}{2}T\right)\label{eq:Ri-contr-1}\\
           &\ge \left( \frac{7}{4}- \sqrt{2}\right) T^{3/4}\ge \frac{1}{32}T^{3/4}\label{eq:Ri-contr-2},
    \end{align}
    resulting in a contradiction.
    Here \eqref{eq:Ri-contr-1} follows from the fact that $2K\ge T^{1/4}\ge K$ when $T\ge 16$; \eqref{eq:Ri-contr-2} follows from $T^{1/2}\ge 4$.
    Therefore, we have proved that for at least one $k$, $R_k\ge T^{3/4}/32$. This completes the proof.
\end{proof}
\end{document}